\newtheorem{theorem}{Theorem}
\newtheorem{assumption}{Assumption}
\title{\LARGE \bf Quadrotor Formation Flying Resilient to Abrupt Vehicle\\
Failures via a Fluid Flow Navigation Function
}
\author{Matthew Romano$^{a,\dagger}$, Harshvardhan Uppaluru$^{b,\dagger}$,
Hossein Rastgoftar$^{b}$, Ella Atkins$^{a}$%
\thanks{$^{\dagger}$ Both authors contributed equally to this work.}%
\thanks{$^a$Authors are with the Robotics Department at the University of Michigan {\tt\small\{mmroma, ematkins\}@umich.edu}}%
\thanks{$^b$Authors are with the Aerospace and Mechanical Engineering department at the University of Arizona {\tt\small\{huppaluru, hrastgoftar\}@arizona.edu}}%
}
\date{January 2022}
\begin{document}

\maketitle

\begin{abstract}
This paper develops and experimentally evaluates a navigation function for quadrotor formation flight that is resilient to abrupt quadrotor failures and other obstacles. The navigation function is based on modeling  healthy quadrotors as particles in an ideal fluid flow. We provide three key contributions: (i) A Containment Exclusion Mode (CEM) safety theorem and proof which guarantees safety and formally specifies a minimum safe distance between quadrotors in formation, (ii) A real-time, computationally efficient CEM navigation algorithm, (iii) Simulation and experimental algorithm validation. Simulations were first performed with a team of six virtual quadrotors to demonstrate velocity tracking via dynamic slide speed, maintaining sufficient inter-agent distances, and operating in real-time. Flight tests with a team of two custom quadrotors were performed in an indoor motion capture flight facility,  successfully validating that the navigation algorithm can handle non-trivial bounded tracking errors while guaranteeing safety.  
\end{abstract}

\section{Introduction}

Quadrotors are becoming increasingly popular due to advances in sensing, actuation, and computational power. A quadrotor is cheap and highly maneuverable with a wide range of useful applications. A single quadrotor has limited processing and payload carriage capabilities. However, a group of cooperatively controlled quadrotors can achieve difficult tasks with considerable advantages in terms of resilience to failure, mission complexity, and scalability.

\begin{figure}[ht]
    \centering
    \includegraphics[width=\columnwidth,trim=0 0 0 0, clip]{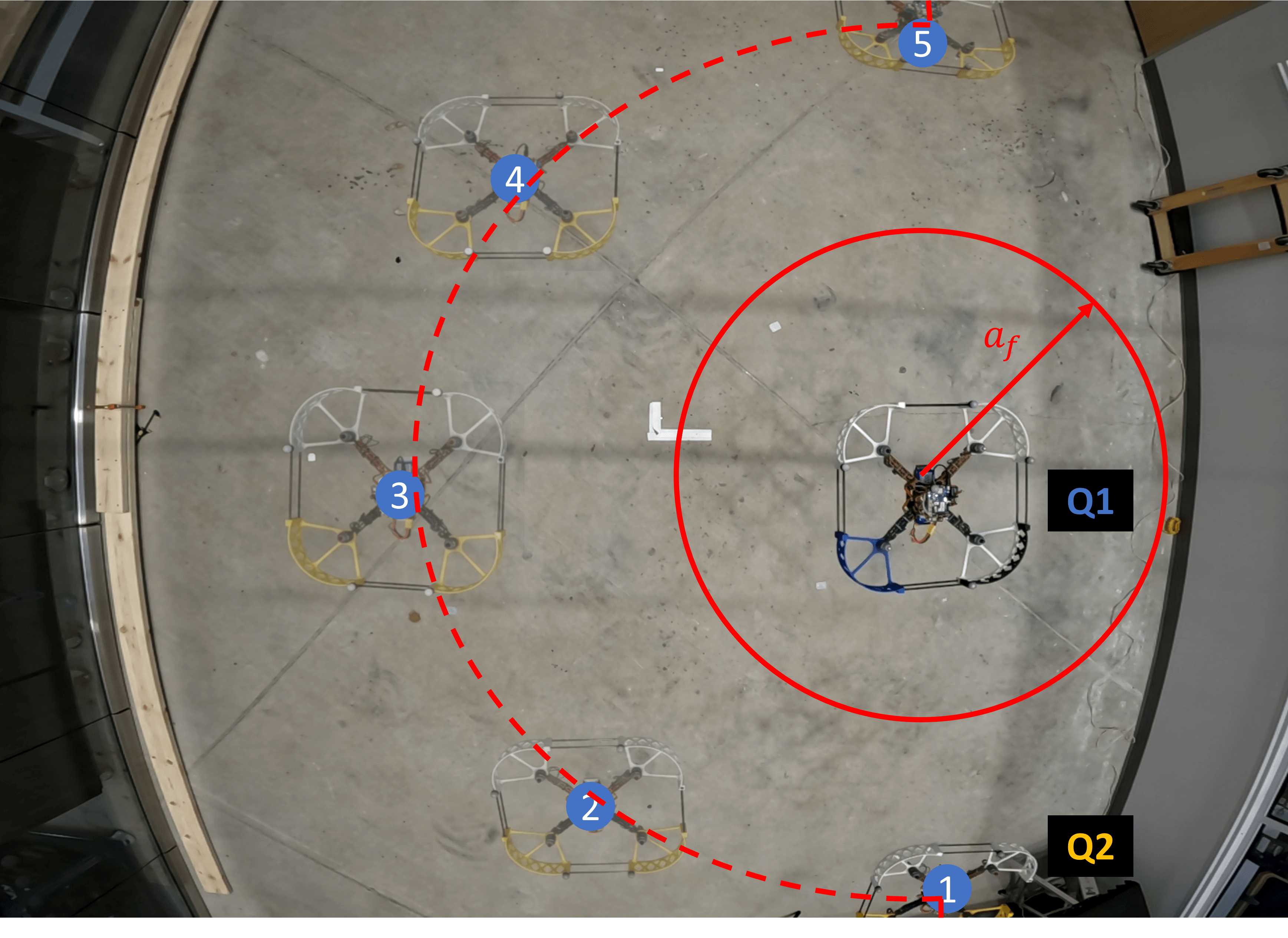}
    \caption{Q1 fails and stays in place within the solid red circle. Q2 uses the CEM navigation algorithm to smoothly avoid the failed vehicle and travels along the dashed red path. The blue circles indicate five time points for Q2.}
    \label{fig:teaser_figure}
\end{figure}

Several multi-agent coordination approaches have been investigated with applications ranging from surveillance \cite{du2017pursuing, da2017multi} to formation flying \cite{rojo2017implementation, ha2017multi}, rescue missions \cite{altair2017decision}, wildlife monitoring and exploration \cite{witczuk2018exploring}, precision agriculture \cite{tsouros2019review}, cooperative payload transport \cite{rastgoftar2018cooperative1, rossomando2020aerial} and hazardous environment sensing \cite{argrow2005uav}. Virtual structure (VS) is a centralized technique in which a group of agents, operating as particles of a virtual rigid body, preserve a strict geometric relationship to one another and a frame of reference %i.e., each agent in a multi-agent system acts as particles of a virtual rigid body 
\cite{lewis1997high, ren2004decentralized}. Desired agent trajectories are calculated in a centralized manner through rigid body rotations and translations of the virtual body in $3$-D motion space. However, due to the rigid body requirement and a single point of failure, the range of applications using VS is limited. %To tackle the disadvantage of a centralized approach, a decentralized virtual structure was also proposed \cite{ren2004decentralized}. 
Consensus is one of the most extensively studied cooperative control approaches in which a team of agents reach agreement/consensus by communicating only with their neighbours. It is a decentralized coordination approach and is broadly divided into two categories: consensus without a leader (i.e., leaderless consensus) \cite{rao2013sliding, ren2009distributed} and consensus with a leader (i.e., leader-follower consensus) \cite{song2010second, xu2016robust}. %Furthermore, multi-agent consensus under fixed communication topology \cite{wang2018fixed}, switching inter-agent communication \cite{wen2016group}, retarded consensus \cite{sun2020mean, ahmed2020consensus} and in the presence of communication delays \cite{zhou2018h, zhang2019delay} have been previously studied .
Containment control \cite{ferrari2006laplacian, li2013distributed, wang2013distributed, wen2015containment} is a decentralized leader-follower method where collective motion of all agents is achieved with multiple leaders. Defined by geometric constraints i.e., all agents are contained within a particular area, the follower agents obtain the desired positions through local communication with in-neighbor agents. %Single and double integrator dynamics were used for multi-agent system containment control modeling \cite{ wang2013distributed}. A containment control approach was presented in which the follower agents are modelled by linear dynamics \cite{li2013distributed}. 
Based on the principles of continuum mechanics, continuum deformation is a recent multi-agent coordination approach for which agents in a multi-agent system (MAS) are treated as particles of a body deforming under a homogeneous transformation \cite{rastgoftar2014evolution,rastgoftar2016continuum, rastgoftar2017multi, emadi2021physics}.

The idea of using potential fields for real-time path planning was originally proposed for robot arms \cite{khatib1985real}. An attractive potential is placed at the goal, repulsive potentials on obstacles, and gradient descent is used to plan a path. This can be called a navigation function. This is a useful approach, however it has issues of local minima. The randomized potential field approach attempts to get around this by executing a random walk whenever the planner gets stuck in a local minimum, but still provides no guarantee of planning completeness \cite{barraquand1991robot}. For harmonic functions subject to Laplace's equations, a navigation function can be constructed that only has saddle points as local minima such that they can be easily escaped \cite{connolly1990path}. This model has been applied to robot formations, but with explicit models of repulsive functions between robots, one goal sink node, and computation time on the order of one second. However, the scenario considered in this paper needs an algorithm to quickly respond to an abrupt failure on the order of milliseconds.

This paper proposes a novel safety recovery algorithm for a multi-quadrotor system (MQS) and experimentally validates the approach in simulation and quadrotor flight experiments. We consider each quadrotor in a MQS as a finite number of particles in an ideal fluid flow pattern. By classifying quadrotors as healthy and failed agents, we consider failed quadrotors as singularity points of the fluid flow.  We then define desired trajectories of the healthy agents along flow streamlines so that the failed quadrotors are safely partitioned outside the motion space (see Figure \ref{fig:teaser_figure}). We enclose the failed quadrotors with virtual obstacles to define no-fly zones. %pillars in indoor environments. 
Because we treat these obstacles as singularity points that are excluded from the motion space, our proposed safety-recovery approach is called \textit{containment exclusion mode (CEM)} throughout this paper.   

Compared to the existing literature, this paper offers the following contributions:
\begin{enumerate}
    \item We propose a real-time, computationally efficient CEM navigation algorithm that dynamically modifies quadrotor sliding speed to maintain a desired maximum speed for all quadrotors. %Trajectories for each quadrotor are computed in real-time %is computed as it goes like a navigation function. Avoids saddle points with additional random term.
    \item We formally specify safety and collision avoidance by providing two CEM safety theorems. Theorem \ref{thm1} applies to a large quadrotor system with multiple %failures/
    obstacles whereas Theorem \ref{thm:matt} has been developed for experiments for a single obstacle. % in a flow field during CEM
    \item We experimentally evaluate the CEM navigation algorithm using a team of quadrotors in formation while validating our CEM safety theorem. % i.e., this works, but there are probably a couple more things we could model including communication delay, computation time, more things about dynamics and relating that to the flow field gradients which velocities are derived from.
\end{enumerate}

% \subsection{Organization of the paper}
This paper is organized as follows.  A problem statement (Section \ref{Problem Statement}) is followed by a description of our proposed CEM approach (Section \ref{Sec:ProblemStatement}).
% in detail with a subsection explaining our CEM Planner Algorithm. 
 Section \ref{Sec:ExpSetup} presents our experimental setup, followed by experimental results in Section \ref{Sec:ExpResults}. Section \ref{Sec:Conclusion} concludes the paper.

\section{Problem Statement}
\label{Problem Statement}
% {\color{blue}We consider safety recovery in the }
Consider a system of quadrotors, flying in formation, defined by set $\mathcal{S} = \left\{1,\cdots,N_q\right\}$, where $N_q$ is the number of quadrotors. Let $N_f$ be the number of quadrotors that fail to follow the desired formation and are assumed to hover in this work. The failed quadrotors are identified by set $\mathcal{F} \subset \mathcal{S}$ and $N_f < N_q$. %The quadrotors in set $\mathcal{F}$ are part of set $\mathcal{O}$, defined as obstacle set, that can also include other obstacles such as no-fly zones, geo-fenced areas. 
The remaining quadrotors are termed as healthy quadrotors because they follow the desired formation, and is defined by the set $\mathcal{H}=\mathcal{S} \setminus \mathcal{F}$. 

At reference time $t_0$, $N_f$ quadrotors fail to follow the desired formation. The failed quadrotors are enclosed by virtual obstacles as defined below in Assumption \ref{Assumption0}. The healthy quadrotors should now follow the safety-recovery approach called CEM where the quadrotors in $\mathcal{H}$ should avoid the %failed quadrotors
failed quadrotors and safely pass them before continuing with their individual trajectories. 

Given the problem setup defined above, we first develop a navigation function where the healthy quadrotors are treated as particles in ideal fluid flow combining uniform flow and doublet flow in the $x-y$ plane. 

\section{Approach}
\label{Sec:ProblemStatement}

The set $\mathcal{F}$ satisfies the following assumption:
\begin{assumption}
    \label{Assumption0}
   We assume failed quadrotor is enclosed by a virtual obstacle defined as no-fly zone that is completely contained within an enclosed cylinder of radius $a_f$, centered at $z_f = (x_f, y_f)$, elongated in the $z$-direction (e.g. Figure \ref{fig:teaser_figure}).
\end{assumption}

This ideal fluid flow pattern is generated by the complex function
% If $z$ is a complex variable denoting position in $x-y$ plane such that $z = x + \mathbf{i}y$. Then $f(\mathbf{z})$ is a function of the complex variable and is defined as
\begin{equation}
    \centering
    f(\mathbf{z}) = \sum_{f \in \mathcal{F}} \left( \mathbf{z} - \mathbf{z}_f + \frac{a_p^2}{\mathbf{z} - \mathbf{z}_f} \right),
    \label{Eq:Functionz}
\end{equation}
where $\mathbf{z}=x+\mathbf{j}y$ is the complex variable, $\mathbf{z}_f =x_f+\mathbf{j}y_f$ is the position of singularity point $f\in \mathcal{F}$ with finite set $\mathcal{F}$ identifying all singularities in the $x-y$ plane, and $a_p=a_{f}+\delta+\epsilon$ is the planned exclusion radius. $a_{p}$ is larger than the actual exclusion radius ($a_{f}$) by a bound on healthy agent controller error ($\delta$) and healthy agent radius ($\epsilon$).
% , and the planned exclusion radius $a_{p}=a_{f}+\delta+\epsilon$ is used to account for healthy agent controller error ($\delta$) and radius ($\epsilon$). 
% Using complex analysis facilitates calculations to obtain the potential function $\phi(x, y)$ and stream function $\psi(x, y)$ of ideal fluid flow such that, 
Eq.  \eqref{Eq:Functionz} can be rewritten as
\begin{equation}
    \centering
    f(\mathbf{z}) = \phi(x, y) + \mathbf{i}\psi(x,y)
    \label{Eq:phiplusipsi}
\end{equation}
where 
\begin{subequations}
\begin{equation}
    \phi \left( x, y \right) 
    =  \sum_{f \in \mathcal{F}} 
    \left(
    \frac{\left(x-x_{f}\right)\left( \left(x-x_{f}\right)^2 + \left(y-y_{f}\right)^2 + a_p^2 \right)}{\left(x-x_{f}\right)^2 + \left(y-y_{f}\right)^2}
    \right)
    \label{eq:phi_xy}
\end{equation}
\begin{equation}
    \psi \left( x, y \right) 
    = \sum_{f \in \mathcal{F}} 
    \left(
    \frac{\left(y-y_{f}\right) \left( \left(x-x_{f}\right)^2 + \left(y-y_{f}\right)^2 -  a_p^2 \right) }{\left(x-x_{f}\right)^2 + \left(y-y_{f}\right)^2}
    \right)
    \label{eq:psi_xy}
\end{equation}
\end{subequations}
are called \textit{potential} and \textit{stream} functions, respectively. Note that Eq. \eqref{Eq:phiplusipsi} provides a conformal mapping between $x-y$ and $\phi-\psi$ planes, where Cauchy-Reimann and Laplace equations are satisfied by $\phi(x, y)$ and $\psi(x, y)$. 

\begin{equation*}
    \centering
    \frac{\partial \phi}{\partial x} = \frac{\partial \psi}{\partial y}
\end{equation*}
\begin{equation*}
    \centering
    \frac{\partial \phi}{\partial y} = - \frac{\partial \psi}{\partial x}
\end{equation*}
\begin{equation*}
    \centering
    \nabla^2\psi = 0, \hspace{3cm}\nabla^2\phi = 0
\end{equation*}

Using the ideal fluid flow model on healthy quadrotors $\mathcal{H}$ at any time $t \geq t_0$, $x$ and $y$ components are constrained to slide along the stream curve

\begin{equation}
    \centering
    \psi((x_i(t_0), y_i(t_0))) = \psi_{i,0}, \forall i \in \mathcal{H}
\end{equation}

\begin{assumption}
    \label{Assumption1}
    Velocity in the $z$-direction is $0$ i.e., the altitude remains constant for each healthy quadrotor recovery trajectory. 
\end{assumption}

\begin{assumption}
    \label{Assumption2}
    As the recovery approach is active, we assume that the failed quadrotors do not leave the enclosing no-fly cylinders of radius $a_f$. 
    % Therefore, $a_f$ is chosen sufficiently large or the healthy quadrotors move sufficiently fast.    
\end{assumption}

Using the approach defined above, the main goal of this paper is to present a real-time, safe and efficient CEM algorithm that can plan safe recovery trajectories for all healthy quadrotors. 
% By wrapping the failed quadrotors by a cylinder, the system of quadrotors can safely and quickly recover 
To this end, we first present CEM as a navigation algorithm in Section \ref{CEM Planner (Algorithm)}. Then, we formally specify safety for CEM by providing inter-agent collision avoidance conditions (Section \ref{subsec:safety_analysis}).

% Scenario & Approach
%Five vehicles are traveling in a formation using centralized control. At some point, ONLY? one vehicle "fails" and stops moving. The system recognizes this and wraps the failing vehicle with a potential field. The remaining vehicles move around the failed vehicle. As soon as all the healthy agents pass the failed vehicle, they continue towards their destination. 

\subsection{CEM Navigation Algorithm}
\label{CEM Planner (Algorithm)}

Algorithm \ref{alg:CEM_planner} guides agents along constant $\psi$ streamlines in the fluid flow pattern.
This is performed in real-time such that the "$t$ for loop" (line 12) is run every $\Delta T$ seconds a total of $m$ times. $\Delta T=10ms$ (100Hz) in this paper. To minimize computational overhead, this method acts as a navigation function, calculating the immediate command for each agent to avoid collisions and then incrementing the trajectory in $\phi$ on each subsequent loop.

% Algorithm / Software
\begin{algorithm}[ht]
    \caption{CEM Navigation}
    \label{alg:CEM_planner}
    \begin{algorithmic}[1] % The number tells where the line numbering should start
        \Procedure{CEM}{$\mathcal{F}$, $m$, $v_{des}$, $\mathbf{r}$}
            \State \textbf{Inputs:} Failed agents $\mathcal{F}$, number of timesteps $m$, desired speed $v_{des}$, healthy vehicle positions $\mathbf{r}$
            \State \textbf{Output:} Trajectory for each vehicle via sendCommands()
            \State
            \State $\mathbf{r}_{d} \gets$ $\mathbf{r}$
            \State $\Delta \phi \gets v_{des} * \Delta T$
            \For{i $\gets$ 1, $N_{h}$}
                \State $\phi[i]$ $\gets$ $\phi(\mathbf{r}[i],\mathcal{F})$ \Comment{Eq. \ref{eq:phi_xy}}
                \State $\psi[i]$ $\gets$ $\psi(\mathbf{r}[i],\mathcal{F})$ \Comment{Eq. \ref{eq:psi_xy}}
            \EndFor
            \State
            \For {t $\gets$ 1, m}
                % \State OneStepCEM()
                  \State $\phi_{last} \gets \phi$
                    \For{k $\gets$ 1, $K$}
                        % \State $\phi \gets \phi_{last}$
                        \For{i $\gets$ 1, $N_{h}$}
                                % \State Get position of $i$-th vehicle at failure time
                                \State $\phi[i] \gets \phi_{last}[i] + \Delta \phi$
                                % \State $\phi$ $\gets$ $\phi(x,y)$ (Eq. \ref{eq:phi_xy})
                                % \State $\psi$ $\gets$ $\psi(x,y)$ (Eq. \ref{eq:psi_xy})
                                \State $\mathbf{r}_{d}[i], \dot{\mathbf{r}}_{d}[i]$ $\gets$ CalcXY($\phi[i]$,$\psi[i]$,$\mathbf{r}_{d}[i]$)
                                % \State $\mathbf{r}_{d,i}(j) \gets x, y$
                        \EndFor
                        \State $v_{max} \gets \max_{i \in N_{h}}{||\dot{\mathbf{r}}_{d}[i]||}$
                        \State $\Delta \phi \gets \Delta \phi * v_{des} / v_{max}$ %\Comment{Dynamic $\Delta \phi$}
                    \EndFor
                    \State sendCommands($\mathbf{r}_{d}$,$\dot{\mathbf{r}}_{d}$)
                \State sleep($\Delta T$)
            \EndFor
        \EndProcedure
    \end{algorithmic}
\end{algorithm}

$\phi$ and $\psi$ values for each agent are initialized to their starting positions. Also, sliding speed $\Delta \phi$ is set to match the desired vehicle velocities for the non-distorted case. 
% This slide speed is adjusted on each subsequent iteration to regulate the maximum commanded speed of all agents to the desired speed.
At every time step, $\phi$ is increased by $\Delta \phi$ and the resulting position and velocity for each vehicle is calculated using "CalcXY" (Alg. \ref{alg:CalcXY}). Maximum velocity $v_{max}$ is then used to update $\Delta \phi$ for the next iteration to more closely match $v_{des}$ (Line 20). This update step assumes $v_{max}=C \Delta \phi_{last}$ and wants to calculate $\Delta \phi_{next}$ such that $v_{des}=C \Delta \phi_{next}$ where $C$ is a constant assumed to encapsulate the gradient for the small step sizes. Therefore, $\Delta \phi_{next} = \Delta \phi_{last} \frac{v_{des}}{v_{max}}$ should calculate the proper $\Delta \phi$ to obtain $v_{des}$. This is executed $K$ times. If the assumption that $C$ is constant is reasonable, then $K=2$ is a good selection which effectively plans twice per time step, first to calculate $\Delta \phi$ and second to calculate the trajectories using $\Delta \phi$. $K=2$ was used in this paper for the main results and varied for additional results to see its effects. After the $K$ loop,  commands are output for that time step and the process repeats for the next time step.

Alg. \ref{alg:CalcXY} calculates the position given $\psi$, $\phi$ using a gradient descent inspired approach. On each iteration ($n=20$ in this paper), the current estimate is used to calculate the associated $\phi$ and $\psi$ and then the estimate is updated in the direction of decreasing error via the inverse of the Jacobian matrix, defined as:

\begin{equation}
    \mathbf{J}
    (x,y,\mathcal{F})
    = 
    \begin{bmatrix}
        \frac{\partial \phi}{\partial x} & \frac{\partial \phi}{\partial y} \\
        \frac{\partial \psi}{\partial x} & \frac{\partial \psi}{\partial y}
    \end{bmatrix}.
    \label{eq:jacobian}
\end{equation}

\noindent  $\mathbf{r}_{noise}$ is added to the update step to escape saddle points on the edge of the exclusion zone as: 

\begin{equation}
    \label{eq:r_noise}
    \mathbf{r}_{noise}
    =
    \begin{cases}
        |\mathcal{N}(0,\sigma)|\hat{\mathbf{r}}_{a} +\mathcal{N}(0,\sigma) \hat{\mathbf{r}}_{b} & \text{if } || \mathbf{r} - \mathbf{r}_{f} || \leq a_{p}\\
        \mathbf{0}, & \text{else} 
    \end{cases}
\end{equation}

\noindent where $\mathcal{N}(0,\sigma)$ generates Gaussian noise with zero-mean and standard deviation $\sigma$, $\hat{\mathbf{r}}_{a}= \frac{\mathbf{r}-\mathbf{r}_{f}}{||\mathbf{r}-\mathbf{r}_{f}||}$ is the direction away from the exclusion zone, and $\hat{\mathbf{r}}_{b}= \begin{bmatrix}0 & 1\\ -1 & 0\end{bmatrix}\hat{\mathbf{r}}_{a}$ is perpendicular to $\hat{\mathbf{r}}_{a}$.
$\sigma=1$mm was used in this paper.
This procedure ensures that commanded positions never violate the planned exclusion radius $a_{p}$. Lastly, the first-order difference equation is used to calculate the velocity.

\begin{algorithm}[ht]
    \caption{CalcXY}
    \label{alg:CalcXY}
    \begin{algorithmic}[1] % The number tells where the line numbering should start
        \Procedure{CalcXY}{$\phi$,$\psi$,$\mathbf{r}$}
            \State $\mathbf{r}_{-1} \gets \mathbf{r}$
            \For{i $\gets$ 1, $n$}
                % \State Get position of $i$-th vehicle at failure time
                \State $\phi_{1}$ $\gets$ $\phi(\mathbf{r},\mathcal{F})$ \Comment{Eq. \ref{eq:phi_xy}}
                \State $\psi_{1}$ $\gets$ $\psi(\mathbf{r},\mathcal{F})$ \Comment{Eq. \ref{eq:psi_xy}}
                \State $J_{h}$ $\gets$ $J_{h}(\mathbf{r},\mathcal{F})$ \Comment{Eq. \ref{eq:jacobian}}
                \State $\mathbf{r} \gets \mathbf{r} - J_h^{-1} \begin{bmatrix} \phi_{1} - \phi \\ \psi_{1} - \psi \end{bmatrix} + \mathbf{r}_{noise}$ \Comment{Eq. \ref{eq:r_noise}}
            \EndFor
            \State $\dot{\mathbf{r}} \gets (\mathbf{r}-\mathbf{r}_{-1} )/ \Delta T$
            \State \Return $\mathbf{r}, \dot{\mathbf{r}}$
        \EndProcedure
    \end{algorithmic}
\end{algorithm}

\subsection{Safety Analysis}
\label{subsec:safety_analysis}

Collision avoidance between healthy agents and obstacles is assured via the construction of the ideal fluid flow pattern. The planned exclusion radius for obstacles is $\delta + \epsilon$ larger than the actual exclusion radius. Therefore, if no agents are initialized inside the planned exclusion radius, the navigation function will never take them closer than $\delta+\epsilon$ which guarantees safety. 

Inter-agent collision avoidance is more complex as it involves distortion in the $x-y$ plane along constant $\psi$ streamlines. Theorem \ref{thm1} provides a general, conservative, inter-agent collision avoidance condition while Theorem \ref{thm:matt} provides a tighter, more useful condition for the single obstacle case.

\begin{theorem}\label{thm1}
    Let $r_{i,0}$ be the position of healthy agent $i \in \mathcal{H}$ at the time of failure and is defined as 
    \begin{equation*}
    r_{i,0} = [x_{i, 0} , y_{i,0}]^{T}
    \end{equation*}
    Define $\phi_{i,0} = \phi(x_{i, 0}, y_{i,0})$, $\psi_{i,0} = \psi(x_{i, 0}, y_{i,0})$, 
    \begin{equation}\label{mindistphipsi}
        p_{min,0} = \stackunder{min}{\stackunder{i,j}{i$\neq$j}} \sqrt{(\phi_{i,0} - \phi_{j,0})^2 + (\psi_{i,0} - \psi_{j,0})^2},
    \end{equation}
    with $p_{min,0}$ the minimum separation distance between agents in the $\phi-\psi$ plane, and
       \begin{equation*}
        \lambda_{max} = \max_{x,y} \left(\left(\phi_x^2 + \phi_y^2\right)\right).
    \end{equation*}
Assume that the trajectory control error of each individual quadcopter does not exceed $\delta$ and every quadcopter can be enclosed by a ball of radius $\epsilon$.  Inter-agent collision avoidance during CEM mode is guaranteed, if the sliding speed  $\dot{\phi}_i=\dot{\phi}$ is the same for every healthy quadcopter, and 
% \begin{equation}\label{intcolcondition}
%     \frac{p_{min,0}^2}{\lambda_{max}}\geq 
% 4{\color{red}a_{p}^2}.
% \end{equation}
\begin{equation}\label{intcolcondition}
    \frac{p_{min,0}^2}{\lambda_{max}}\geq 
4( \delta + \epsilon)^2.
\end{equation}

\end{theorem}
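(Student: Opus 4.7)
My plan has three pieces: (i) show that the $(\phi,\psi)$-plane separation of any pair of agents is frozen in time; (ii) convert that $(\phi,\psi)$ lower bound into an $(x,y)$ lower bound on commanded positions using the fact that $f$ is holomorphic; (iii) absorb the tracking error $\delta$ and the agent-body radius $\epsilon$ to obtain a collision-free condition on actual positions.

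First I would use the hypothesis that every healthy agent is constrained to its own streamline $\psi_i(t)=\psi_{i,0}$ and is driven at the common sliding speed $\dot\phi_i=\dot\phi$. Integrating gives $\phi_i(t)=\phi_{i,0}+\int_{t_0}^{t}\dot\phi\,d\tau$, so that for any pair $i\ne j$,
\begin{equation*}
\phi_i(t)-\phi_j(t)=\phi_{i,0}-\phi_{j,0},\qquad \psi_i(t)-\psi_j(t)=\psi_{i,0}-\psi_{j,0}.
\end{equation*}
Hence the Euclidean $(\phi,\psi)$-plane distance between commanded positions is constant and at least $p_{min,0}$ by definition \eqref{mindistphipsi}.

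Next I would exploit that $f=\phi+\mathbf{j}\psi$ is holomorphic on the motion domain (the Cauchy--Riemann equations stated right after \eqref{Eq:phiplusipsi} give $|f'(z)|^2=\phi_x^2+\phi_y^2$). For two commanded positions $\mathbf z_i,\mathbf z_j$ I would parametrize the straight segment between them and estimate
\begin{equation*}
|f(\mathbf z_i)-f(\mathbf z_j)| \;=\; \Bigl|\int_{\mathbf z_j}^{\mathbf z_i} f'(z)\,dz\Bigr| \;\le\; \sup_{z}|f'(z)|\cdot|\mathbf z_i-\mathbf z_j| \;=\; \sqrt{\lambda_{max}}\,|\mathbf z_i-\mathbf z_j|.
\end{equation*}
Combined with step one, this yields
\begin{equation*}
|\mathbf z_i(t)-\mathbf z_j(t)| \;\ge\; \frac{p_{min,0}}{\sqrt{\lambda_{max}}}
\end{equation*}
for all $t\ge t_0$ and all pairs of healthy agents.

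Finally I would close the argument by bookkeeping the error budget. Each actual center is within $\delta$ of its commanded center, so the actual center-to-center distance is at least $|\mathbf z_i-\mathbf z_j|-2\delta$; collision avoidance between balls of radius $\epsilon$ requires actual centers to be separated by at least $2\epsilon$. It therefore suffices that $p_{min,0}/\sqrt{\lambda_{max}}-2\delta\ge 2\epsilon$, which rearranges to the hypothesized bound \eqref{intcolcondition}. I expect the holomorphic distortion estimate in step (ii) to be the main subtlety: one must justify that the straight segment joining any two commanded positions stays inside the domain where $\phi_x^2+\phi_y^2$ is bounded by $\lambda_{max}$ (i.e.\ outside every planned exclusion disk), which follows from the fact that streamlines generated by \eqref{Eq:Functionz} never enter those disks, together with convexity considerations on the reachable set; everything else reduces to algebra and the constancy argument of step (i).
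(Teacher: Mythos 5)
Your proposal follows the paper's own route almost step for step: your step (i) is the paper's observation that a common sliding speed makes the healthy team a rigid body in the $\phi$--$\psi$ plane, so $p_{min}(t)=p_{min,0}$; your step (ii) rests on the Cauchy--Riemann identity $\mathbf{J}^T\mathbf{J}=\left(\phi_x^2+\phi_y^2\right)\mathbf{I}$, i.e.\ $|f'|^2=\phi_x^2+\phi_y^2\leq\lambda_{max}$, exactly as in \eqref{mainxyphipsi}--\eqref{ineqconsts}; and your step (iii) is the same $2(\delta+\epsilon)$ error budget. The only real difference is that you make the passage from the infinitesimal relation to finite separations explicit by integrating $f'$ along the chord joining two commanded positions, whereas the paper asserts the finite-distance inequality \eqref{dminnn} directly.

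The gap is in the justification you offer for that chord estimate. $\lambda_{max}$ is finite only when the maximum is taken over the flow domain, i.e.\ outside the planned exclusion disks, since $\phi_x^2+\phi_y^2\to\infty$ as $(x,y)\to(x_f,y_f)$. The straight segment between two healthy agents need not stay in that domain: the complement of the disks is not convex, and neither is the union of streamlines, so ``streamlines never enter the disks plus convexity of the reachable set'' does not discharge the step. Concretely, two agents riding streamlines on opposite sides of an obstacle have a chord crossing the disk; for agents near the two stagnation points $\mathbf{z}_f\pm a_p$ the chord passes essentially through the singularity itself (where $f'$ is unbounded and $f$ is undefined), and for a single obstacle this is precisely the configuration in which $d_{min}\approx p_{min,0}/\sqrt{\lambda_{max}}$ is tight --- so the bound cannot be obtained from a straight-segment estimate with the exterior maximum of $|f'|$. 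To close the step you would have to integrate along a path that remains in the flow domain (which lengthens the path and degrades the constant by a geodesic-versus-chord factor) or give a separate argument for pairs whose chord crosses an exclusion disk. To be fair, the paper's own proof is exposed to the same subtlety when it jumps from \eqref{ineqconsts} to \eqref{dminnn}, so you have correctly isolated the delicate point; but the convexity argument you propose to resolve it is not valid as stated, and as written your step (ii) does not go through.
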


\begin{proof}
Under conformal mapping \eqref{Eq:Functionz}, there exists a one-to-one mapping between  infinitesimal element $dx-dy$ in the $x-y$ plane and infinitesimal element $d\phi-d\psi$ in the $\phi-\psi$ plane, where $(dx,dy)$ and $(d\phi,d\psi)$ can be related by
% Due to conformal mapping between $\phi-\psi$ and $x-y$ plane, the relationship between these two planes can be written as below: 
\begin{equation}
    \begin{bmatrix}
        d\phi \\
        d\psi
    \end{bmatrix}
    = 
    \mathbf{J} \begin{bmatrix}
        dx \\
        dy
    \end{bmatrix}
    ,
\end{equation}
and $\mathbf{J}$ is the Jacobian matrix in \eqref{eq:jacobian}. Therefore,
\begin{equation}
            d\phi^2+ d\psi^2
    = 
    \begin{bmatrix}
        dx && dy
    \end{bmatrix}\mathbf{J}^T \mathbf{J} \begin{bmatrix}
        dx \\
        dy
    \end{bmatrix}
\end{equation}

 Using Cauchy-Reimann Relations, we can show that $\mathbf{J}^T\mathbf{J}=\mathrm{diag}\left(\phi_x^2 + \phi_y^2,\phi_x^2 + \phi_y^2\right)\in \mathbb{R}^{2\times 2}$ is diagonal, and thus, we can write

\begin{equation}\label{mainxyphipsi}
\begin{split}
     d\phi^2 + d\psi^2&
    = 
    \begin{bmatrix}
        dx&
        dy
    \end{bmatrix}
    \begin{bmatrix}
        \phi_x^2 + \phi_y^2 & 0 \\
        0 & \phi_x^2 + \phi_y^2
    \end{bmatrix}
    \begin{bmatrix}
        dx\\
        dy
    \end{bmatrix}
    \\
    =&\left(\phi_x^2 + \phi_y^2\right)\left(dx^2 + dy^2\right).
\end{split}
\end{equation}
This also implies that
\begin{equation}\label{ineqconsts}
    dx^2+dy^2\geq \frac{d\phi^2+d\psi^2}{\lambda_{max}},\qquad x\neq x_f,~y\neq y_f.
\end{equation}
Eq. \eqref{ineqconsts} implies that
\begin{equation}\label{dminnn}
    \left( d_{min} (t)\right)^2\geq \frac{\left(p_{min}(t)\right)^{2}}{\lambda_{max}},\qquad \forall t
\end{equation}
where
\begin{subequations}
\begin{equation}
    p_{min}(t) =  \stackunder{min}{\stackunder{i,j}{i$\neq$j}} \sqrt{{\left(\phi_{i}(t) - \phi_{j}(t)\right)}^2 +\left(\psi_{i}\left(t\right) - \psi_j\left(t\right)\right)^2} \qquad \forall t.
\end{equation}
\begin{equation}\label{dmint}
    d_{min}(t) =  \stackunder{min}{\stackunder{i,j}{i$\neq$j}} \sqrt{{\left(x_{i}(t) - x_{j}(t)\right)}^2 + \left(y_{i}\left(t\right) - y_j\left(t\right)\right)^2} \qquad \forall t.
\end{equation}
\end{subequations}
% and inter-agent collision avoidance is guaranteed.
When the sliding speed $\dot{\phi}_i=\dot{\phi}$ is the same for every healthy quadcopter $i$, the healthy agent team moves as particles of a rigid-body in the $\phi-\psi$ plane, and thus, inter-agent distances in the $\phi-\psi$ plane are time-invariant. As a result, the minimum separation distance of the desired formation in the $\phi-\psi$ plane can be assigned at reference time $t_0$, when the failed agent no-fly zone first appears. Therefore,
$
p_{min,0}=p_{min}(t)
$ and Eq. \eqref{dminnn} simplifies to 
\begin{equation}\label{dminnn}
    \left( d_{min} (t)\right)^2\geq \frac{\left(p_{min,0}\right)^{2}}{\lambda_{max}},\qquad \forall t
\end{equation}
% \begin{equation}\label{dminnn}
%     \left( d_{min} (t)\right)^2\geq \frac{\left(p_{min,0}\right)^{2}}{\lambda_{max}} \geq 4 ( \delta + \epsilon)^2,\qquad \forall t
% \end{equation}
Since $ d_{min} (t) \geq 2(\delta + \epsilon) \forall t$ is the collision avoidance condition, this implies that the inter-agent collision avoidance is assured, if Eq. \eqref{intcolcondition} is satisfied.

\end{proof}

Theorem \ref{thm1} provides a sufficient condition for CEM inter-agent collision avoidance without restricting the number of failures. However, condition \eqref{intcolcondition}  conservatively overestimates the required minimum separation distance and it does so in the $\phi-\psi$ plane instead of the $x-y$ plane which is harder to use in practice. To overcome this issue, Theorem \ref{thm:matt} provides a tighter bound on initial formation distances for a collision avoidance guarantee condition for CEM but it works when there exists a single failed agent in the $x-y$ plane.

\begin{theorem}\label{thm:matt}
Inter-agent collision between every agent is avoided if the minimum separation distance $d_{min,0}$ at reference time $t_0$, when the failed agent appears, satisfies the following condition:
\begin{equation}\label{mindist}
\resizebox{0.99\hsize}{!}{%
$
    d_{min,0}=\min\limits_{i,j,~i\neq j}\sqrt{\left(x_{i,0}-x_{j,0}\right)^2+\left(y_{i,0}-y_{j,0}\right)^2}\geq 2\left(\delta+\epsilon\right)+a_p.
$
}
\end{equation}

\end{theorem}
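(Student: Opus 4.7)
The plan is to leverage the rigid-body property in the $\phi$-$\psi$ plane that was already used in the proof of Theorem \ref{thm1}, and then to use the explicit structure of the single-obstacle conformal map to bound how much the inter-agent $x$-$y$ distance can shrink from its initial value.

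First, I would invoke the same observation used in Theorem \ref{thm1}: because every healthy quadcopter slides with a common $\dot{\phi}$, the differences $\phi_i(t)-\phi_j(t)$ and $\psi_i(t)-\psi_j(t)$ are time-invariant. Consequently, writing $\zeta=\phi+\mathbf{j}\psi$ and $z=x+\mathbf{j}y$, the complex separation $\zeta_i(t)-\zeta_j(t)$ is frozen at its value at $t_0$ for every pair of healthy agents.

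Second, I would specialize the map \eqref{Eq:Functionz} to the single-failure case, placing the failed agent at the origin without loss of generality, so that $\zeta = z + a_p^2/z$. A direct per-agent displacement bound then follows: for any $z$ with $|z|\geq a_p$,
\begin{equation*}
|\zeta - z| \;=\; \frac{a_p^2}{|z|} \;\leq\; a_p.
\end{equation*}
Rewriting $z_i-z_j = (\zeta_i-\zeta_j) + a_p^2\bigl(1/z_j-1/z_i\bigr)$ then links the time-invariant $\phi$-$\psi$ gap with the time-varying $x$-$y$ gap through a correction term controlled by $a_p$.

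The crux is showing that the $x$-$y$ distance between any two healthy agents never drops by more than $a_p$ from its initial value, i.e., $d_{ij}(t) \geq d_{ij,0} - a_p$ for all $t \geq t_0$. Once this is in hand, the hypothesis $d_{min,0}\geq 2(\delta+\epsilon)+a_p$ immediately yields $d_{ij}(t)\geq 2(\delta+\epsilon)$, which is exactly the no-collision condition given a tracking-error budget $\delta$ and a quadcopter half-extent $\epsilon$. The step I expect to be the main obstacle is sharpening the naive triangle inequality, which would only give $d_{ij}(t) \geq d_{ij,0} - 2a_p$ from two independent per-agent displacements, down to the stated single factor of $a_p$. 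I anticipate that this sharpening comes from a careful case analysis separating same-half-plane pairs (for which the correction vector $a_p^2(1/z_j-1/z_i)$ is sign-constrained because both $z_i,z_j$ have $\mathrm{Im}(z)$ of the same sign, pinning $1/z_i,1/z_j$ into a common half-disc of radius $1/a_p$) from opposite-side pairs (for which each streamline $\psi=c$ is confined to its own horizontal strip $c\leq y\leq c+a_p$ or $c-a_p\leq y\leq c$, giving a direct lower bound on $|y_i-y_j|$), with the worst case saturating at exactly $a_p$.
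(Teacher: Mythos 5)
Your plan ends up mirroring the paper's own argument -- both reduce the theorem to the claim that the inter-agent distance can shrink by at most $a_p$ during the traverse, so that $d_{ij}(t)\geq d_{ij,0}-a_p\geq 2(\delta+\epsilon)$ -- but you leave exactly that crux step as an ``anticipated'' case analysis, and the sketch you give cannot close it. The half-disc observation is too weak: if $z_i,z_j$ lie in the closed upper half-plane with $|z|\geq a_p$, then $1/z_i,1/z_j$ do lie in a common half-disc of radius $1/a_p$, but that set has diameter $2/a_p$, so the correction $a_p^2\left(1/z_j-1/z_i\right)$ is still only bounded by $2a_p$, not $a_p$. More seriously, the intermediate inequality $d_{ij}(t)\geq d_{ij,0}-a_p$ is simply false without extra hypotheses on where the agents sit at $t_0$. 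Put the failed agent at the origin and take two healthy agents on a slight perturbation of the streamline $\psi=0^{+}$ with constant $\phi$-gap $\Delta\leq 4a_p$ (allowed, since a common $\dot\phi$ preserves the gap): far upstream their Euclidean separation is $\approx\Delta$, but when they straddle the cylinder symmetrically about its top, $z_i\approx a_pe^{\mathbf{j}\theta}$ and $z_j\approx a_pe^{\mathbf{j}(\pi-\theta)}$ with $2a_p\cos\theta=\Delta/2$, their separation is exactly $\Delta/2$ -- a contraction of $\Delta/2$, which reaches $2a_p$ at $\Delta=4a_p$. This along-stream contraction near the stagnation points is invisible to your decomposition, and it is on the same side of the obstacle, so the same-half-plane branch of your case analysis fails outright. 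A second configuration, $z_{i,0}\approx \mathbf{j}(a_p+\eta)$ and $z_{j,0}\approx-\mathbf{j}(a_p+\eta)$ with equal $\phi$, starts $2a_p+2\eta$ apart yet rides streamlines $\psi\approx\pm2\eta$ that converge downstream to separation $\approx4\eta$: a contraction approaching $2a_p$ which, when $a_f\geq\delta+\epsilon$, even threatens the theorem's conclusion unless the minimum in \eqref{mindist} is read to include the failed agent (so that healthy agents cannot start that close to the obstacle).

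So the genuine gap is this: the bound you need is not a pointwise consequence of $|\zeta-z|=a_p^2/|z|\leq a_p$ plus half-plane bookkeeping; any correct argument must use where the agents can be at $t_0$ relative to the singularity (e.g., far upstream, or separated from the failed agent itself by $d_{min,0}$) and must control the along-stream compression on the circle, not just the transverse gap between streamlines. In fairness, the paper's own proof asserts the key claim (``maximum contraction is less than $a_p$, attained at $x=x_f$'') with little more than an appeal to the contour figure and only for the transverse gap, so your proposal reproduces its logic; but judged as a standalone proof it leaves the theorem's entire content unestablished, and the specific sharpening you anticipate is not true in the stated generality.
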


\begin{proof}
Inter-agent collision between agents is avoided if $d_{min}(t)$, defined by \eqref{dmint}, satisfies the following condition:
\[
    d_{min}(t) \geq 2 \left( \delta + \epsilon \right),\qquad  \forall t.
\]
When a single failure exists in the $x-y$ plane, $\psi(x,y)=0$ wraps the failed agent by a circle of radius $a_p$ with the center positioned at $\left(x_f,y_f\right)$ (see Eq. \eqref{eq:psi_xy}, Figure \ref{fig:phi_psi_contours}). The maximum contraction of the distance between two arbitrary recovery paths $\psi(x,y)=\psi_{i,0}$ and $\psi(x,y)=\psi_{j,0}$  is less than $a_p$, and it occurs at $x=x_f$. So, $d_{min}(t) \geq d_{min,0} - a_{p} =  2 \left( \epsilon + \delta \right) \forall t$. Therefore, inter-agent collision between every two agent pair is avoided if the minimum separation distance $d_{min,0}$, obtained on the configuration of the healthy agents at reference time $t_0$, satisfies inequality  \eqref{mindist}.

\begin{figure}
    \centering
    \includegraphics[width=\columnwidth,trim=0 0 0 0, clip]{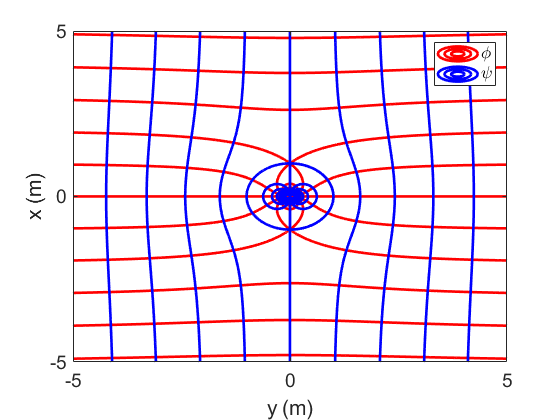}
    \caption{$\phi$, $\psi$ Contours. $a_{p}=1$, $x_{f}=y_{f}=0$. $\phi$ and $\psi$ integer levels are shown. The $\psi=0$ contour smoothly circumvents the exclusion zone.}
    \label{fig:phi_psi_contours}
\end{figure}

\end{proof}

\section{Experimental Setup}
\label{Sec:ExpSetup}

Figure \ref{fig:high_level_sw_block_diagram} shows a high level view of the experimental setup. The Ground Control Station (GCS) computer acts as the centralized planner for the system and sends desired states to each vehicle, where $\mathbf{X}_{d,i}=\begin{bmatrix} \mathbf{r}_{d,i}^{T} & \dot{\mathbf{r}}_{d,i}^{T} \end{bmatrix}^{T}$. An Intel core i7-9750H processor laptop was used as the GCS computer. The real-time commands were computed in C++ and sent over MAVLink using WiFi. The vehicles were equipped with motion capture markers and tracked via an indoor 3m x 4m, eight Optitrack camera motion capture space at 100Hz. The pose feedback, $\mathbf{X}_{i}=\begin{bmatrix} \mathbf{r}_{i}^{T} & \dot{\mathbf{r}}_{i}^{T} \end{bmatrix}^{T}$, was sent over low-latency 900MHz XBee wireless serial modules to the vehicles. M330 quadrotors running the rc\_pilot\_a2sys open source flight controller were used. A cascaded proportional-integral-derivative (PID) controller tracks the desired commands via motion capture feedback. The vehicles have an enclosing circle radius of $\epsilon=28cm$, and it is assumed $\delta=40cm$ bounds controller tracking error.

\begin{figure}
    \centering
    \includegraphics[width=0.5\columnwidth,trim=0 0 0 0, clip]{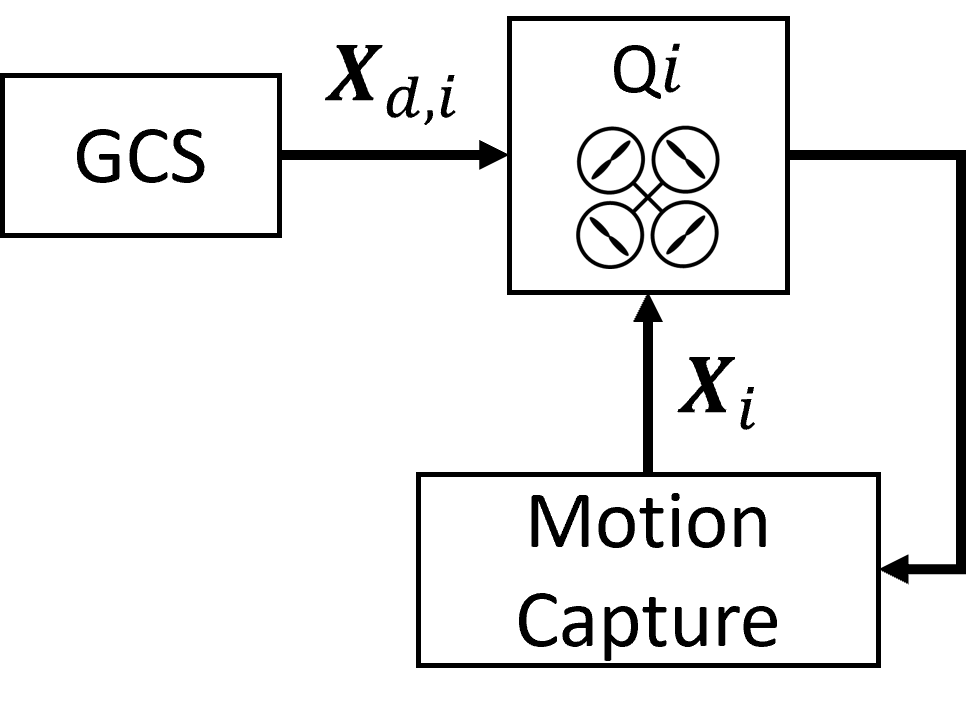}
    \caption{High Level Experimental Setup Block Diagram}
    \label{fig:high_level_sw_block_diagram}
\end{figure}

% \subsection{Formation \& CEM Planner Setup}
Figure \ref{fig:safety_radii} provides a visual depiction of the radii which keep the vehicles from colliding with each other per Sec. \ref{subsec:safety_analysis}. 
% These two terms were all that was needed for HDM mode. However, during CEM mode there is additional distortion that can happen and via the previous section we have determined that to be $a_{p}$. 
The minimum initial formation distance between any two vehicles is $d_{min}=2\left(\epsilon+\delta\right)+a_{p}$. Additionally, when a failure occurs and CEM mode is activated we assume that the failed agent will remain within $\delta$ of its setpoint so $a_{f}= \epsilon + \delta$.  
Therefore, $a_{p}=1.36m$ and $d_{min}=2.72m$.

\begin{figure}
    \centering
    \includegraphics[width=0.9\columnwidth,trim=0 0 0 0, clip]{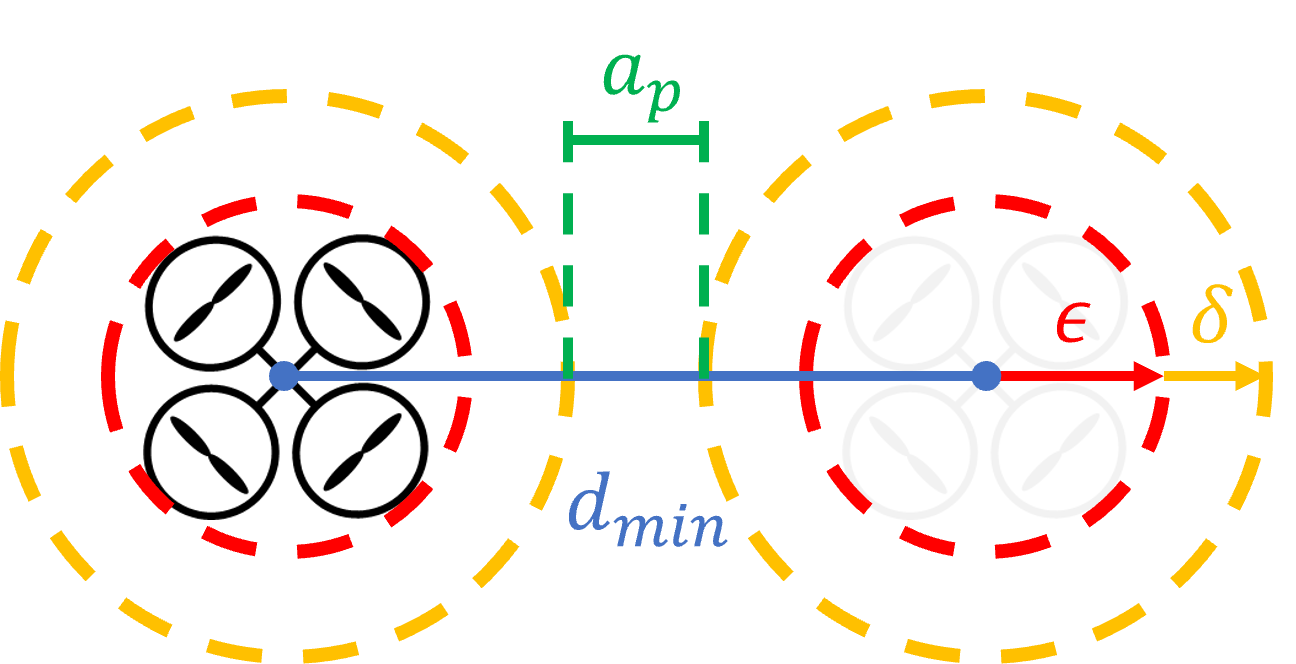}
    \caption{Formation inter-agent distances to guarantee safety.}
    \label{fig:safety_radii}
\end{figure}

\section{Experimental Results}
\label{Sec:ExpResults}
% Describe the test we ran, the formation, the trajectories

\subsection{CEM Navigation Validation with Six Virtual Vehicles}
For this test, the ground control station computer was configured to run exactly as described in the experimental setup section, except that only the setpoints were logged and no vehicles were flying. This was done to validate that the navigation algorithm was able to generate safe trajectories while also maintaining a desired velocity along constant streamlines and operating within runtime constraints.  

 Figure \ref{fig:six_vehicle_test_2d_plot} shows the formation and trajectory followed. The formation is similar to \cite{romano2019experimental} with a leading triangle and interior agents. The initial formation has agents with a minimum separation of 2.72m. The vehicles begin in a six agent formation on the bottom before Q1 fails. This activates the CEM navigation algorithm. An initial ($\phi$, $\psi$) coordinate is calculated for each vehicle and then on each iteration $\phi$ is advanced in a dynamic fashion to track a 1.0 m/s desired maximum velocity. Figure \ref{fig:six_vehicle_test_velocities} shows the commanded velocities. This shows each component of 2D velocity while the norm at each step was the quantity being tracked. Figure \ref{fig:six_vehicle_test_phi} shows the values of $\phi$ advancing while Figure \ref{fig:six_vehicle_test_dphi} shows how $\Delta \phi$ is changed to maintain the specified velocity. Notice how at about 6s and 13s $\Delta \phi$ gets very small. This coincides with Q2 traversing saddle points at each end of the exclusion zone. The Jacobian is large here which causes small changes in $\phi$ to produce a large change in the $x$-$y$ plane (i.e. Q2 moves fast while all other vehicles move slowly). Additionally, Figure \ref{fig:six_vehicle_test_nn} shows the distance between nearest agents. Every agent pair maintains at least $2(\delta + \epsilon)$ separation.

\begin{figure}
    \centering
    \includegraphics[width=\columnwidth,trim=0 0 0 0, clip]{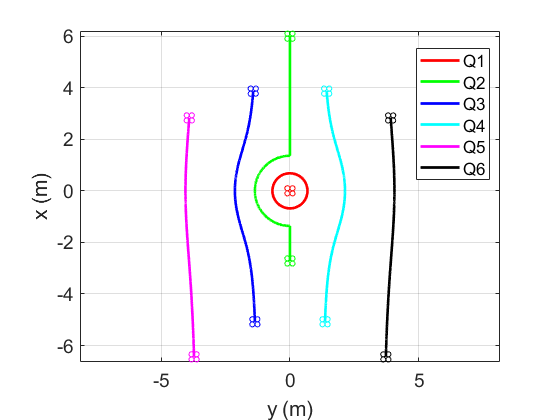}
    \caption{Six Vehicle Test 2D Plot}
    \label{fig:six_vehicle_test_2d_plot}
\end{figure}

\begin{figure}
    \centering
    \includegraphics[width=\columnwidth,trim=0 0 0 0, clip]{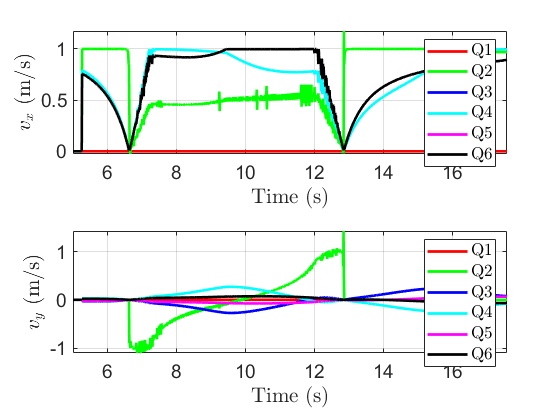}
    \caption{Six Vehicle Test Velocities}
    \label{fig:six_vehicle_test_velocities}
\end{figure}

\begin{figure}
    \centering
    \includegraphics[width=\columnwidth,trim=0 0 0 0, clip]{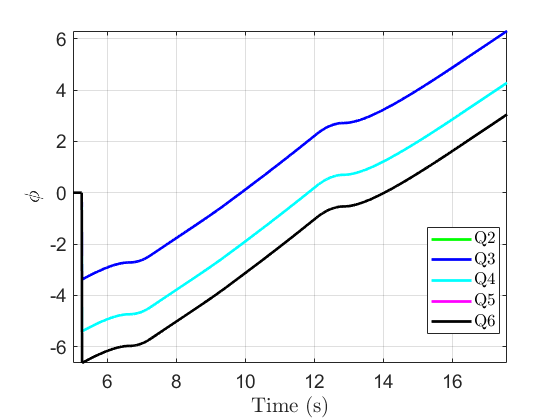}
    \caption{Six Vehicle Test $\phi$}
    \label{fig:six_vehicle_test_phi}
\end{figure}

\begin{figure}
    \centering
    \includegraphics[width=\columnwidth,trim=0 0 0 0, clip]{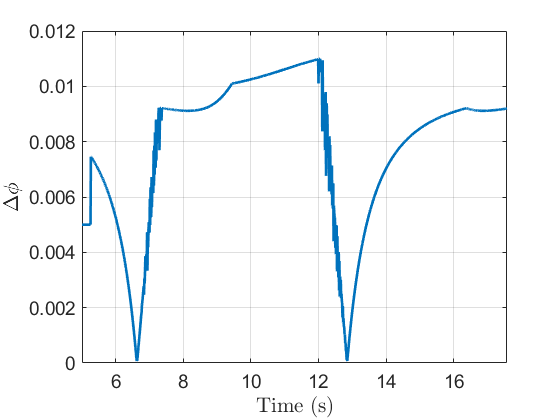}
    \caption{Six Vehicle Test $\Delta \phi$}
    \label{fig:six_vehicle_test_dphi}
\end{figure}

\begin{figure}
    \centering
    \includegraphics[width=\columnwidth,trim=0 0 0 0, clip]{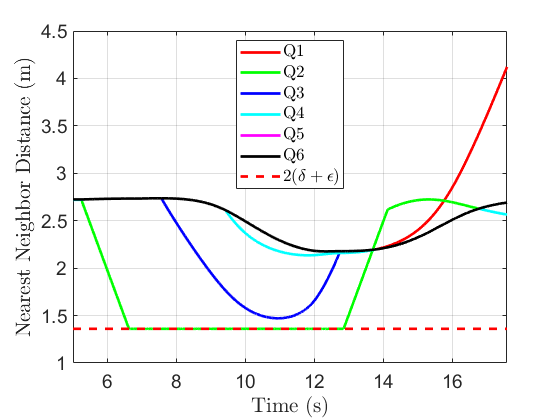}
    \caption{Six Vehicle Test Nearest Neighbor Distance}
    \label{fig:six_vehicle_test_nn}
\end{figure}

To explore the effectiveness of the velocity tracking aspect of our navigation algorithm the six vehicle flight was repeated while the value for K was varied in the range \{1,2,5\}. Figure \ref{fig:2022_02_25_K_plots_vel} shows the maximum velocity of any agent for the various values of K during one of the two saddle points that cause the most difficulty. K=1 reached a 2D speed of 2.4 m/s, much higher than the 1.0 m/s setpoint. Speed error with K=2 was significant lower, while K=5 performed a bit better. However, increasing the value of K essentially replans with a new $\Delta \phi$ K times each iteration. This has a runtime cost which can be seen in Figure \ref{fig:2022_02_25_K_plots_runtime}. Runtime is proportional to the value of K and the number of agents. Runtimes of roughly 0.2ms, 0.4ms, and 1ms were measured for the values of 1, 2, and 5 respectively. 10ms is the hard upper limit since the navigation algorithm is running at 100Hz. A value of K=2 follows the velocity profile well while offering a reasonably low runtime.

\begin{figure}
    \centering
    \includegraphics[width=\columnwidth,trim=0 0 0 0, clip]{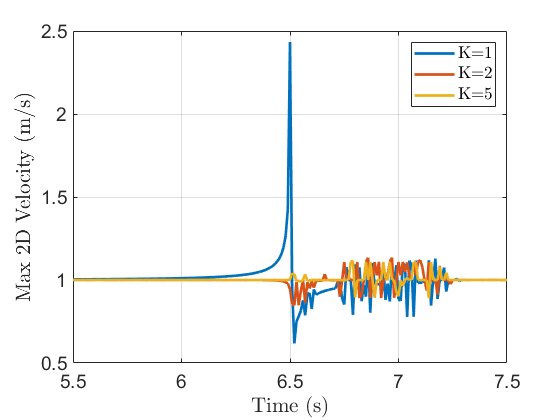}
    \caption{How Varying K affects Velocity Tracking}
    \label{fig:2022_02_25_K_plots_vel}
\end{figure}

\begin{figure}
    \centering
    \includegraphics[width=\columnwidth,trim=0 0 0 0, clip]{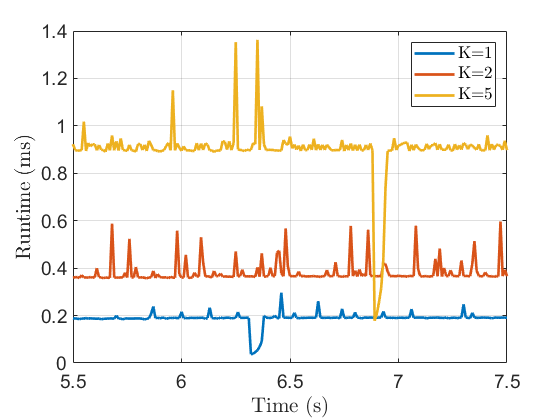}
    \caption{How Varying K affects Runtime}
    \label{fig:2022_02_25_K_plots_runtime}
\end{figure}

\subsection{Two Vehicle Flight Test}
The same formation from the previous section in a reduced configuration (only Q1 and Q2) was flight tested to validate the navigation algorithm working safely to avoid collisions with non-zero controller error bounded by $\delta=40cm$ for each vehicle. Figure \ref{fig:teaser_figure} shows this test.
Figure \ref{fig:2022_02_24_test10_2d_plot} shows the trajectories for failed agent Q1 within its exclusion zone (solid red circle) and Q2 which tracked its desired trajectory computed in real-time (dotted green) closely by its actual trajectory (solid green). The controller error of both agents remained within the $\delta$ bound as seen in Figure \ref{fig:2022_02_24_test10_controller_error}. Figure \ref{fig:2022_02_24_test10_exclusion_zone_distance} shows that the actual trajectory of Q2 was a safe distance away from the exclusion zone. The dynamic value of $\Delta \phi$ can be seen in Figure \ref{fig:2022_02_24_test10_dphi}.

\begin{figure}
    \centering
    \includegraphics[width=\columnwidth,trim=0 0 0 0, clip]{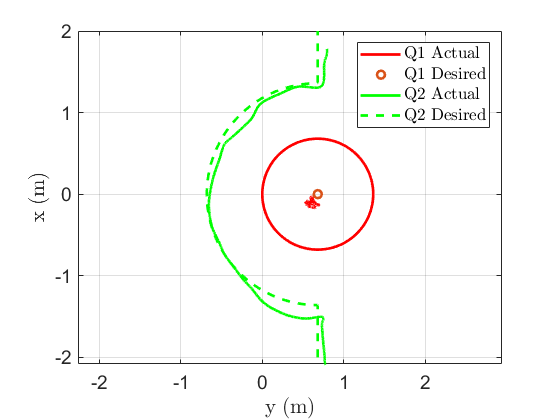}
    \caption{Two Vehicle 2D Plot }
    \label{fig:2022_02_24_test10_2d_plot}
\end{figure}

\begin{figure}
    \centering
    \includegraphics[width=\columnwidth,trim=0 0 0 0, clip]{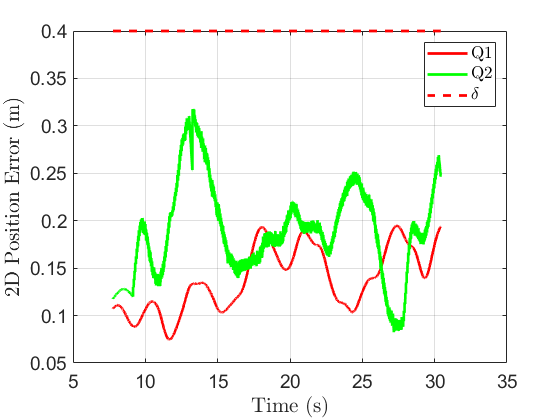}
    \caption{Two Vehicle Controller Error}
    \label{fig:2022_02_24_test10_controller_error}
\end{figure}

\begin{figure}
    \centering
    \includegraphics[width=\columnwidth,trim=0 0 0 0, clip]{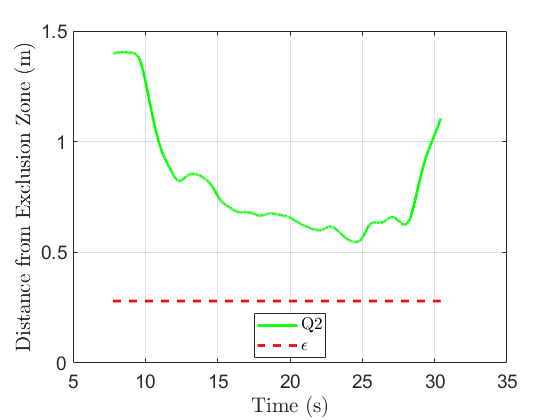}
    \caption{Two Vehicle Exclusion Zone Distance}
    \label{fig:2022_02_24_test10_exclusion_zone_distance}
\end{figure}

\begin{figure}
    \centering
    \includegraphics[width=\columnwidth,trim=0 0 0 0, clip]{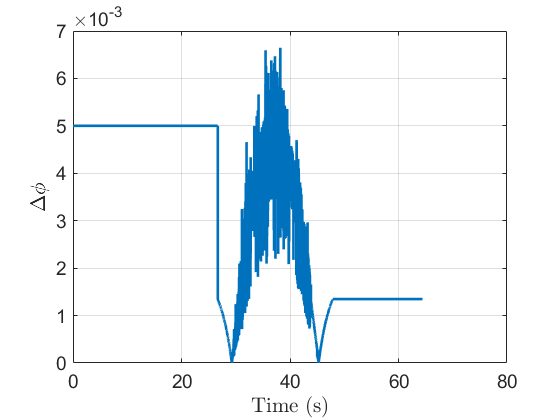}
    \caption{Two Vehicle $\Delta \phi$ }
    \label{fig:2022_02_24_test10_dphi}
\end{figure}

\section{Conclusion}
\label{Sec:Conclusion}
This paper has presented an algorithm for quadrotor formation flight that allows healthy quadrotors to safely continue their mission in the presence of quadrotor failure(s). We presented a novel real-time fluid-flow based CEM navigation algorithm where recovery trajectories of healthy quadrotors are calculated as streamlines of an ideal fluid flow. We analyzed two safety conditions that specify the minimum separation distance required to assure safety. Experimental results validated the safety condition provided.  

Experiments were conducted with quadrotors flying in the $x$ direction of the $x$-$y$ plane i.e., constant $\psi$ direction of the $\phi$-$\psi$ plane. Using a rotation matrix, the CEM navigation algorithm can be improved further to accommodate quadrotor formation flight in any direction of the $x$-$y$ plane. Although this paper demonstrated experiments with a two quadrotor team and a single failure as well as a six virtual quadrotor team with a single failure, using Theorem \ref{thm1}, the CEM navigation algorithm \ref{CEM Planner (Algorithm)} can be extended to  large quadrotor groups and multiple quadrotor failures. We will extend simulation and experiments to larger teams with multiple failed quadrotors in future work. 

\section*{Acknowledgements}
This work was supported in part by the National Science Foundation (NSF)  under Award Nos. 2133690, 1914581, and 1739525.

% Break to a new page.  The above must all fit on 6 pages.
%\clearpage
\bibliographystyle{IEEEtran}
\bibliography{IEEEabrv,refs}

\end{document}